\newcommand*{\affaddr}[1]{#1}
\newcommand*{\affmark}[1][*]{\textsuperscript{#1}}
\journalname{Machine Learning}
\begin{document}

\title{Stochastic Divergence Minimization for Biterm Topic Model}
\author{Zhenghang Cui\affmark[1] \and
        Issei Sato\affmark[1, 2] \and \\
	Masashi Sugiyama\affmark[2, 1]}
\authorrunning{Zhenghang Cui et al.}
\institute{
  Zhenghang Cui \\
  cui@ms.k.u-tokyo.ac.jp \\ \\
  Issei Sato \\
  sato@k.u-tokyo.ac.jp \\ \\
  Masashi Sugiyama \\
  sugi@k.u-tokyo.ac.jp \\ \\
  \affaddr{\affmark[1]The University of Tokyo, Japan}\\
  \affaddr{\affmark[2]RIKEN, Japan}\\
}
\date{Received: date / Accepted: date}
\maketitle

\begin{abstract}
As the emergence and the thriving development of social networks,
a huge number of short texts are accumulated and need to be processed.
Inferring latent topics of collected short texts is useful for understanding its hidden structure and predicting new contents.
Unlike conventional topic models such as latent Dirichlet allocation (LDA),
a biterm topic model (BTM) was recently proposed for short texts to overcome the sparseness of document-level word co-occurrences
by directly modeling the generation process of word pairs.
Stochastic inference algorithms based on collapsed Gibbs sampling (CGS) and collapsed variational inference have been proposed for BTM.
However,
they either require large computational complexity, or rely on very crude estimation.
In this work,
we develop a stochastic divergence minimization inference algorithm for BTM to estimate latent topics more accurately in a scalable way.
Experiments demonstrate the superiority of our proposed algorithm compared with existing inference algorithms.

\keywords{Short text, topic model, biterm, stochastic inference algorithm}
\end{abstract}

\section{Introduction}
As social network services are dominant in people's daily life,
a huge number of short text data has been accumulated.
At the same time,
other data which can be found on traditional web pages,
such as article titles or public forum comments can also be regarded as possessing the same attribute of short length.
It would be an essential and interesting task to explore their inner structure for a wide range of applications,
such as classification based on contents,
or prediction for future documents that have not emerged yet.
Because of the document level word co-occurrence sparsity caused by short document length,
conventional topic models such as probabilistic latent semantic indexing (pLSA) \citep{plsa} or latent Dirichlet allocation (LDA) \citep{lda}
fail to show favorable inference performance on data sets consisting of short texts.
A biterm topic model (BTM) \citep{btm} was proposed to alleviate this problem caused by document level word co-occurrence sparsity.
Instead of each single word,
the generation process of each unordered combination of two words, or a \textit{biterm},
is modeled in BTM.
Each biterm is assumed to be assigned with one topic.
Compared to conventional topic models,
this modification makes BTM less sensitive to the shortness of each document,
and more stable to clearly reveal the relationship between words.
By modeling the word co-occurrences explicitly and combining words into biterms,
it has been shown by experiments \citep{btm} that BTM successfully alleviates the problem caused by document level word co-occurrence sparsity and
keeps the generality and flexibility at the same time.

For inferring model parameters and estimating latent topics for BTM,
a batch inference algorithm based on collapsed Gibbs sampling (CGS) is first proposed together with the model \citep{btm} to approximate the true posterior distribution of parameters.
Based on this batch CGS inference algorithm,
two online algorithms are proposed \citep{btm} to scale up for data sets of large size.
One online algorithm is based on the idea of updating hyperparameters between time slices,
which is inspired by the online LDA algorithm \citep{online-lda},
while the other online algorithm is based on the idea of resampling topics of observed biterms for sufficient times after a new biterm is observed,
which is inspired by an incremental Gibbs sampler for LDA \citep{inc-lda}.
On the other hand,
based on the idea of zero-order stochastic collapsed variational Bayesian inference (SCVB0) for LDA \citep{foulds},
a similar SCVB0 algorithm for BTM was proposed for better latent topics estimation \citep{naru}.
However, these online algorithms are either not working very efficiently on memory usage,
or relying on very crude estimation.

In this paper,
we propose a stochastic divergence minimization (SDM) inference algorithm for BTM based on minimizing the $\alpha$-divergence to estimate latent topics more accurately.
First,
inspired by the work for LDA \citep{cvb},
we reconstruct collapsed variational Bayesian inference which uses only the zero-order Taylor series approximation (CVB0) as an optimization problem of $\alpha$-divergence minimization.
Then,
we apply a stochastic approximation method to this optimization problem to develop a stochastic inference algorithm.

For a general probabilistic model,
CGS inference algorithms try to find a posterior distribution,
while variational Bayesian (VB) inference algorithms try to find a closest distribution within a function family. \citep{beal}
The closeness is usually measured by the KL-divergence.
VB transforms the original inference problem to an optimization problem,
which can be solved by a simple gradient descent algorithm.
Similarly to the manipulation in CGS,
collapsed variational Bayesian (CVB) marginalizes out unconcerned parameters and only infer latent parameters.
For example,
CVB for BTM \citep{naru} marginalized out model parameters,
which form a vector indicating the topic proportion and the matrix indicating the word distribution for each topic,
and only calculated the posterior distribution for latent parameters indicating topic assignments of biterms.
CGS algorithms usually converge slower and are strongly influenced by the initial state of parameters due to the inner characteristic of a Monte Carlo Markov Chain sampling algorithm.
On the other hand,
CVB is a deterministic algorithm.
Empirically,
it converges faster and performs better \citep{Asuncion}.

Since exact evaluation of expectations in the CVB formula is intractable,
the idea of using only the zero-order term of its Taylor series as a rough approximation is appealing.
This results in a zero-order CVB inference algorithm (CVB0) proposed for BTM \citep{naru}.
Based on CVB0,
stochastic approximations are developed to scale up the algorithm for huge data sets \citep{naru}.
However,
the reason why zero-order approximation is used instead of higher order approximations is not clearly explained.
Furthermore,
although the SCVB0 for the BTM algorithm utilizes a scale coefficient to reduce the computational complexity of each iteration from $\mathcal{O}(W)$ to $\mathcal{O}(1)$,
where $W$ denotes the size of the vocabulary,
the risk of arithmetic underflow in floating point calculations always exists when processing data sets of very large size.
It also utilizes a very crude approximation for essential statistics at each iteration.

\paragraph{Contributions}
Considering the issues discussed above,
we propose a novel SDM inference algorithm for BTM.
We have three main contributions listed as follows.

\begin{itemize}
 \item We provide a novel formulation of SCVB0 inference for BTM from the perspective of $\alpha$-divergence minimization.
       This provides a new means to understand the inner attribute of SCVB0 inference for BTM.
       This is inspired by the similar work developed for LDA \citep{cvb}.
 \item We derive an SDM algorithm for BTM based on the $\alpha$-divergence minimization formulation of SCVB0.
       SDM for BTM is an one-pass algorithm,
       which means it processes each biterm only once and stops when all biterms have been processed.
       Compared to SCVB0 for BTM,
       SDM for BTM requires the same amount of memory and has the same computational complexity for processing a single biterm.
       On the other hand,
       SCVB0 does not preserve the sufficient statistics for the counts of each word
       and
       has the risk of arithmetic underflow in floating point calculations.
       SDM for BTM does not have these problems
       and
       provides better approximation.
       Experiments reveal that SDM for BTM can estimate latent topics more accurately,
       and thus can predict documents which have not emerged yet with higher accuracy than existing methods.
 \item We analyze the convergence of our proposed method by using Martingale convergence theory.
\end{itemize}

The remainder of this paper is organized as follows.
In Section 2, we introduce the related works on BTM, its existing inference algorithms,
and the theoretical background for SDM.
In Section 3, we introduce our proposed SDM algorithm.
In Section 4, we conduct experiments to evaluate our proposed method against existing methods and discuss the result.
In Section 5, we conclude this paper.

\section{Related Works}
In this section, we will introduce the biterm topic model (BTM) \citep{btm},
followed by its batch and online inference algorithms.
Essential information for $\alpha$-divergence is presented at the end of this section.

\subsection{BTM}
Conventional topic models such as LDA usually fail to show satisfactory performance on short text data sets.
BTM was proposed to alleviate this problem by modifying the word generating part of the graphical model.
Instead of modeling the generation of each word,
BTM directly models the generation of biterms,
which are unordered combinations of two words.
For example,
a document of $n$ words will generate $\binom n2$ combinations of two words.
Compared to conventional topic models,
this modification makes BTM less sensitive to the short length of each document,
and biterms are more stable to clearly reveal the relationship between words.
Based on the original paper \citep{btm},
the notation is listed as follows.

\begin{itemize}
 \item A data set contains $N_B$ biterms,
       where each biterm is denoted by $b_i = \{w_{i1}, w_{i2}\}$.
 \item The number of topics is denoted by $K$.
 \item The size of vocabulary is denoted by $W$.
 \item A topic proportion vector is denoted by $\theta$.
       Its length is $K$ and all of its entries sum to $1$.
 \item A word distribution matrix is denoted by $\Phi$.
       Its size is $K \times W$.
       Each row vector $\phi_k$ has length $W$ and sums to $1$.
 \item A topic indicator variable for biterm $b_i$ is denoted by $z_i$.
       It has a length of $K$ and all of its entries sum to $1$.
\end{itemize}

The generative process is described formally as follows.

\begin{enumerate}
 \item Draw $\theta$ $\sim$ Dirichlet($\gamma$)
 \item For each topic $k$
 \begin{enumerate}
  \item Draw $\phi_k$ $\sim$ Dirichlet($\beta$)
 \end{enumerate}
 \item For each biterm $b_i$
 \begin{enumerate}
  \item Draw $z_i$ $\sim$ Multinomial($\theta$)
  \item Draw $w_{i1}, w_{i2}$ $\sim$ Multinomial($\phi_{z_i}$)
 \end{enumerate}
\end{enumerate}

Here,
Dirichlet($\gamma$) denotes a Dirichlet distribution with parameter $\gamma$,
and Multinomial($\theta$) denotes a multinomial distribution with parameter $\theta$.
The graphical model of BTM is shown in Fig. \ref{fig:btm}.
\begin{figure}
 \centering
 \epsfig{figure=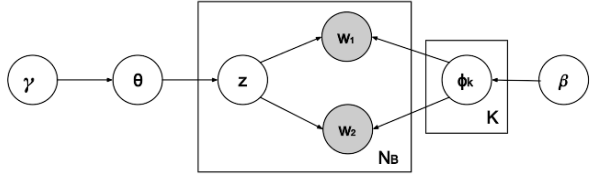,width=.7\textwidth}
 \caption{The graphical model of BTM.
 $\gamma$ and $\beta$ are hyperparameters for the prior Dirichlet distributions of model parameters $\theta$ and $\Phi$.
 $z$ denotes hidden parameters for each biterm indicating assigned topics.
 $w_1$ and $w_2$ are observed words of a biterm.}
 \captionsetup{justification=centering}
 \label{fig:btm}
\end{figure}

Following the generation process,
we can express the likelihood of a data set $B$ conditioned on model parameters $\theta$ and $\Phi$ as

\begin{equation}
 P(B|\theta, \Phi) = \prod_{i=1}^{N_B} \sum_{k=1}^K \theta_k \phi_{k, w_{i1}} \phi_{k, w_{i2}} .
\end{equation}

\subsection{Batch Inference Algorithm}
Here,
we will concisely introduce the batch inference algorithm which estimates all of the three parameters,
$z$ which indicates the topic assignments,
$\theta$ which indicates the topic proportion and $\Phi$ which indicates the word distribution for each topic.
Since it is intractable to compute the exact posterior distributions of these parameters,
the CGS algorithm is used to approximate the true posterior distributions \citep{btm}.
Parameters $\theta$ and $\Phi$ are first integrated out using conjugate priors,
then $z_i$ for each biterm $b_i$ is sampled using the posterior distribution conditioned on all of the other variables.
After processing all biterms,
$\theta$ and $\Phi$ can be restored using $z$.
However, this can be a computational burden when the size of given data set is large,
which motivates the development of stochastic inference algorithms that will be discussed in Section \ref{sec:obtm} and Section \ref{sec:ibtm}.
The following formula is used to sample $z_i$ for each biterm $b_i$:

\begin{equation}
 \label{gibbs}
 P(z_i=k|z_{\backslash i}, B) \propto
 (n_{\backslash i, k} + \gamma)
 \frac{(n_{\backslash i, w_{i1}|k}+\beta)(n_{\backslash i, w_{i2}|k}+\beta)}
 {(n_{\backslash i,\cdot|k}+W\beta)(n_{\backslash i,\cdot|k}+W\beta + 1)} .
\end{equation}

Let $z_{\backslash i}$ be the whole topic assignment vector without considering $b_i$,
$n_{\backslash i, k}$ be the count of biterms assigned to topic $k$ without counting $b_i$ and
$n_{\backslash i, w|k}$ be the count of times that word $w$ is assigned to topic $k$ without counting $b_i$.
The dot in $n_{\backslash i, \cdot|k}$ means taking the sum over all words.
After a sufficient number of iterations over the whole data set,
we can restore $\theta$ and $\Phi$ using following formulas:

\begin{equation}
 \label{btm-phi}
 \phi_{k,w} = \frac{n_{w|k} + \beta}{n_{\cdot|k}+W\beta},
\end{equation}
\begin{equation}
 \label{btm-theta}
 \theta_k = \frac{n_k + \gamma}{N_B + K\gamma},
\end{equation}
where $n_k$ is the number of biterms assigned to topic $k$ and $n_{w|k}$ is the count of times that word $w$ is assigned to topic $k$.
The dot in $n_{\cdot|k}$ means taking the sum over all words.

\subsection{Online BTM Algorithm}
\label{sec:obtm}
In recent real-world inference problems,
the size of data to analysis is usually very large and keeps increasing.
To deal with such large data,
it would be useful to develop algorithms that can handle data in the streaming form.
In the original paper \citep{btm},
two kinds of algorithms have been introduced to deal with data sets of very large size.
The online BTM algorithm will be introduced here and the incremental BTM algorithm will be introduced in Section \ref{sec:ibtm}.

The idea of the online BTM algorithm is inspired by the similar algorithm proposed for LDA \citep{online-lda}.
The data set is supposed to be separated in multiple time-slices,
e.g., hourly, daily or weekly.
Within the processing of a single time-slice sample,
hyperparameters $\gamma$ and $\beta$ are updated using statistics of data in this time slice.
After a sufficient number of iterations,
parameters $\theta$ and $\Phi$ can be restored to reflect the influence of this time slice.

The notations are described as follows.
A biterm set of time $t$ is denoted by $B^{(t)}$.
The number of biterms assigned to topic $k$ within $B^{(t)}$ is denoted by $n_k^{(t)}$.
The number of times word $w$ is assigned to topic $k$ within $B^{(t)}$ is denoted by $n_{w|k}^{(t)}$.
Hyperparameters for $\theta$ are denoted by vector $\{\gamma_1, \ldots, \gamma_k \}$ and
hyperparameters for $\Phi$ are denoted by matrix $\{\beta_1, \ldots, \beta_K \}$,
where $\beta_k$ is a vector consisting of $\{ \beta_{k, 1}, \ldots, \beta_{k, W}\}$.
The conditional distribution for sampling each topic $z_i$ is given by

\begin{equation}
 \label{online-gibbs}
 P(z_i=k|z_{\backslash i}^{(t)}, B^{(t)}, \gamma^{(t)}, \beta^{(t)}) \propto
  (n_{\backslash i, k}^{(t)} + \gamma_k^{(t)})
  \frac{(n_{\backslash i, w_1|k}^{(t)} + \beta_{k, w_1}^{(t)}) (n_{\backslash i, w_2|k}^{(t)} + \beta_{k, w_2}^{(t)})}
  {[\sum_{w=1}^W(n_{\backslash i, w|k}^{(t)} + \beta_{k, w}^{(t)})] [\sum_{w=1}^W(n_{\backslash i, w|k}^{(t)} + \beta_{k, w}^{(t)}) + 1]}.
\end{equation}

After the processing of each time-slice sample,
hyperparamters can be updated as

\begin{equation}
 \label{obtm-gamma}
 \gamma_k^{(t+1)} = \gamma_k^{(t)} + \lambda n_k^{(t)},
\end{equation}

\begin{equation}
 \label{obtm-beta}
 \beta_{k,w}^{(t+1)} = \beta_{k,w}^{(t)} + \lambda n_{w|k}^{(t)},
\end{equation}
where the decay weight is denoted by $\lambda \in [0,1]$.
It controls the dependency to data in past time slices.
The details of the procedure are described in Alg. \ref{alg:obtm}.

\begin{algorithm}
\caption{Online BTM Algorithm}
\label{alg:obtm}
\begin{algorithmic}
 \State Set $\gamma^{(1)} = \{\gamma, \ldots, \gamma\}$, $\{\beta_k = \{\beta, \ldots, \beta\}\}_{k=1}^K$.
 \For{$t$ = 1 to $T$}
  \For{$b_i \in B^{(t)}$}
   \State Sample a topic for $b_i$ using a uniform distribution
  \EndFor
  \For{iterations}
   \For{$b_i \in B^{(t)}$}
    \State Sample a topic for $b_i$ using the distribution defined by Eq. \eqref{online-gibbs}
    \State Update statistics of $n_k^{(t)}$, $n_{w_1|k}^{(t)}$ and $n_{w_2|k}^{(t)}$
    \State Update $\gamma$ and $\{\beta_k\}_{k=1}^K$ using Eq. \eqref{obtm-gamma} and Eq. \eqref{obtm-beta}.
   \EndFor
  \EndFor
  \State Compute $\theta^{t}$ and $\Phi^{(t)}$ using Eq. \eqref{btm-theta} and Eq. \eqref{btm-phi}.
 \EndFor
\end{algorithmic}
\end{algorithm}

\subsection{Incremental BTM Algorithm}
\label{sec:ibtm}
Although the online BTM algorithm can be adapted to sequential data,
updating parameters immediately after a biterm arrived may be essential in some situations.
The incremental BTM algorithm is proposed for this purpose.
It can update parameters after the arrival of each single biterm.

The idea of the incremental BTM algorithm is inspired by the incremental Gibbs sampler \citep{inc-lda}.
Specifically,
the main task is that after the arrival of a new biterm,
when the routine of sampling its topic ends,
a biterm sequence called a rejuvenation sequence will be constructed on the run and the topic of all biterms belonging to this sequence will be resampled.
Apparently,
the length and the choice of the rejuvenation sequence would influence the performance profoundly.
For convenience,
the sequence length is regarded as a hyperparameter and the uniform distribution is used to generate it.

The details of the procedure are described in Alg. \ref{alg:ibtm}.

\begin{algorithm}
\caption{Incremental BTM Algorithm}
\label{alg:ibtm}
\begin{algorithmic}
 \For{$b_i \in B$}
  \State Sample a topic for $b_i$ using Eq. \eqref{gibbs}.
  \State Update statistics of $n_k$ and $n_{w|k}$
  \State Generate rejuvenation sequence $R$
  \For{$b_j \in R$}
   \State Sample a topic for $b_j$ using Eq. \eqref{gibbs}.
   \State Update statistics of $n_k$ and $n_{w|k}$
  \EndFor
 \EndFor
 \State Compute global parameters $\theta$ and $\Phi$ using Eq. \eqref{btm-theta} and Eq. \eqref{btm-phi}
\end{algorithmic}
\end{algorithm}

\subsection{SCVB0 Algorithm for BTM}
The batch algorithm, CVB0 for BTM, will be introduced following its stochastic formulation,
SCVB0 for BTM.

CVB0 for BTM \citep{naru} is inspired by CVB0 for LDA \citep{Asuncion}.
Similarly to CGS,
global parameters $\theta$ and $\Phi$ are first marginalized out and only inference for latent parameter $z$ is performed.
A zero-order approximation of Taylor series is utilized because some expectations are intractable to evaluate.
The updating formula for variational parameter $z_{i,k}$ can be deducted as

\begin{equation}
 \label{scvb0-sampling}
 z_{i,k} \propto (N_{\backslash i, k} + \alpha)
  \frac{(N_{\backslash i, w_{i1}|k} + \beta) (N_{\backslash i, w_{i2}|k} + \beta)}
  {(2\, N_{\backslash i, k} + W\beta) (2\, N_{\backslash i, k} + W\beta + 1)},
\end{equation}
where $N_k = \sum_{b_i\in B} z_{i, k}$,
$N_{w|k} = \sum_{b_i\in B_w} z_{i, k}$,
$B_w$ denotes the set of biterms containing word $w$ and $\backslash i$ means counting without considering $b_i$.

SCVB0 for BTM is based on the idea of ignoring the subtraction of the current biterm
and update statistics in a stochastic way.
Storing all variational parameters is not necessary and a very crude estimation of $N_k$ and $N_{w|k}$ when a biterm $b_i$ is observed can be expressed by

\begin{equation}
 \hat N_k = |B|\, z_{i,k},
\end{equation}

\begin{equation}
 \hat N_{w|k} =
  \begin{cases}
    |B|\,z_{i,k} & \mathrm{if} \,\,w\in b_i, \\
    0 & \mathrm{otherwise} .
  \end{cases}
\end{equation}

Then, $N_k$ and $N_{w|k}$ can be updated using the following formulas:

\begin{equation}
 N_k \leftarrow (1-\rho_t) N_k + \rho_t \hat N_k,
\end{equation}

\begin{equation}
 N_{w|k} \leftarrow (1-\rho_t) N_{w|k} + \rho_t \hat N_{w|k},
\end{equation}
where $\rho_t = 1/(t+\tau)^\kappa$ denotes the step size.

To reduce the computational complexity of each update from $\mathcal{O}(W)$ to $\mathcal{O}(1)$,
the following technique is used to represent the value of $N_{w|k}$.
A scaling coefficient $a$ and a dummy matrix $A_{w|k}$ are in fact stored,
where $N_{w|k} = a\,A_{w|k}$ is satisfied.
Every time when $N_{w|k}$ is updated,
one just needs to multiply $a$ by $(1-\rho_t)$ and manually computes the values of $A_{w_{i1}|k}$ and $A_{w_{i2}|k}$.
This manipulation significantly reduces the computational complexity,
but bears the risk of $a$'s underflow,
because $(1-\rho_t)$ is multiplied repeatedly during the algorithm.

After processing all of the biterms,
global parameters can be restored using the following formulas:

\begin{equation}
 \label{scvb0-theta}
 \theta_k \propto N_k + \alpha,
\end{equation}

\begin{equation}
 \label{scvb0-phi}
 \phi_{k, w} \propto N_{w|k} + \beta.
\end{equation}

The details of the procedure are described in Alg. \ref{alg:scvb}.

\begin{algorithm}
\caption{SCVB0 BTM Algorithm}
\label{alg:scvb}
\begin{algorithmic}
 \For{$b_i \in B$}
  \For{each topic $k$}
   \State Compute $z_{i,k}$ using Eq. \eqref{scvb0-sampling}
   \State Update $N_{k}$ and $N_{w|k}$
  \EndFor
 \EndFor
 \State Compute global parameters $\theta$ and $\Phi$ using Eq. \eqref{scvb0-theta} and Eq. \eqref{scvb0-phi}
\end{algorithmic}
\end{algorithm}

\subsection{$\alpha$-divergence}
Here we briefly introduce the concepts of $\alpha$-divergence and local divergence projection inference.
More details can be found in \citet{amari} and \citet{minka}.

\paragraph{Definition}
The $\alpha$-divergence can be perceived as a generalized KL divergence.
We will denote its detailed definition using two distributions $p(x)$ and $q(x)$.
The $\alpha$-divergence from $p(x)$ to $q(x)$, indexed by $\alpha \in (-\infty, \infty)$,
is defined as

\begin{equation}
 D_{\alpha}[p||q] = \frac{\int\alpha p(x)+(1-\alpha)q(x)-p(x)^{\alpha}q(x)^{1-\alpha}dx}{\alpha(1-\alpha)}.
\end{equation}

Notice that $p(x)$ and $q(x)$ need not to be normalized before calculating the $\alpha$-divergence.
Some useful special cases of $\alpha$ are:

\begin{equation}
 D_{-1}[p||q] = \frac12\int\frac{(q(x)-p(x))^2}{p(x)}dx,
\end{equation}

\begin{equation}
 \lim_{\alpha \to 0}D_{\alpha}[p||q] = \mathrm{KL}[q||p],
\end{equation}

\begin{equation}
 \lim_{\alpha \to 1}D_{\alpha}[p||q] = \mathrm{KL}[p||q].
\end{equation}

\paragraph{Local $\alpha$-divergence projection}
Suppose that the distribution $q(x)$ we approximate can be fully factorized.
That is, $q(x) = \prod_{i=1}^n q(x_i)$, where $x_i$ denotes the $i$-th element of the vector $x$.
$x = (x_1, x_2, \ldots, x_n)^\top$, where $\top$ denotes the transpose.
Depending on $p(x)$,
it is intractable to naively compute the $\alpha$-divergence.
To avoid this problem,
we focus on each single $x_i$ and then optimize each $q(x_i)$ by

\begin{equation}
 \textrm{argmin}_{q(x_i)} D_{\alpha}[p(x_i|x_{\backslash i})q(x_{\backslash i})||q(x)],
\end{equation}
where $x_{\backslash i}$ represents all but $i$-th entry of $x$ and $q(x) = q(x|x_{\backslash i})q(x_{\backslash i})$.
Its update formula can be obtained by taking the derivative of the $\alpha$-divergence and equating it to zero:

\begin{equation}
 q(x_i) \propto \mathbb{E}_{q(x_{\backslash i})}
  \left[ \left(\frac{p(x)}{q(x_{\backslash i})}\right)^{\alpha} \right]
   ^{\frac1{\alpha}}.
\end{equation}
Since the expectation over $q(x_{\backslash i})$ is computationally intractable,
we approximate it by first splitting the numerator as

\begin{equation}
 q(x_i) \propto \mathbb{E}_{q(x_{\backslash i})}
  \left[\left(p(x_i|x_{\backslash i})\frac{p(x_{\backslash i})}{q(x_{\backslash i})}\right)^{\alpha}\right]^{\frac1{\alpha}}.
\end{equation}
We then substitute $p(x_{\backslash i})$ with $q(x_{\backslash i})$ to obtain the following approximation:

\begin{equation}
 q(x_i) \propto \mathbb{E}_{q(x_{\backslash i})}[(p(x_i|x_{\backslash i}))^{\alpha}]^{\frac1{\alpha}}.
\end{equation}

This is the method we will use in the derivation of a stochastic divergence minimization algorithm to approximate the $\alpha$-divergence.

\section{Proposed Method}
In this section,
we propose a novel SDM inference algorithm for BTM.
We will first show the derivation of SDM.
Then we will show its relation to the leave-one-out likelihood (LOO).

\subsection{Derivation of Divergence Minimization}
We assume the independence between latent topics of the given biterms as

\begin{equation}
 q(B, z) = \prod_{i=1}^{N_B}q(b_i, z_i).
\end{equation}
We then estimate this distribution by $\alpha$-divergence minimization:

\begin{equation}
 q^*(B, z) = \textrm{argmin}_{q(B, z)} \textrm{D}_{\alpha}[p(B, z)||q(B, z)],
\end{equation}
where $q^*(B, z)$ denotes the optimized distribution.

Since it is intractable to compute this minimization,
we consider the following local divergence minimization.
First, noting that

\begin{equation}
 p(b_i,z_i=k) \propto
  (n_{\backslash i, k}+\gamma)
  \frac{(n_{\backslash i, w_{i1}|k}+\beta)(n_{\backslash i, w_{i2}|k}+\beta)}
  {(n_{\backslash i, \cdot | k}+W\beta) (n_{\backslash i, \cdot | k}+W\beta + 1)},
\end{equation}
where the notations are the same as those in Eq. \eqref{gibbs} for CGS.

We then reparameterize $q(b_i, z_i)$ as follows:

\begin{equation}
 \label{sdm-q}
 q(b_i, z_i=k)\propto \frac{a_k^{\backslash i}b_{k, w_{i1}}^{\backslash i}b_{k, w_{i2}}^{\backslash i}}{c_k^{\backslash i}(c_k^{\backslash i} + 1)},
\end{equation}

\begin{equation}
 a_k^{\backslash i} = \tilde n_{\backslash i, k}+\gamma,
\end{equation}
\begin{equation}
 b_{k, w}^{\backslash i} = \tilde n_{\backslash i, w|k}+\beta,
\end{equation}
\begin{equation}
 c_k^{\backslash i} = \tilde n_{\backslash i, \cdot | k}+W\beta.
\end{equation}
Notice that $\tilde n_{\backslash i, k}$, $\tilde n_{\backslash i, w|k}$ and $\tilde n_{\backslash i, \cdot | k}$ here are not counts.
They are just parameters of the function defined above.

We also define

\begin{equation}
 q^{\backslash a}(b_i,z_i=k) = \frac{b_{k, w_{i1}}^{\backslash i}b_{k, w_{i2}}^{\backslash i}}{c_k^{\backslash i}(c_k^{\backslash i} + 1)},
\end{equation}
\begin{equation}
 q^{\backslash b_1}(b_i,z_i=k) = \frac{a_k^{\backslash i}b_{k, w_{i2}}^{\backslash i}}{c_k^{\backslash i}(c_k^{\backslash i} + 1)},
\end{equation}
\begin{equation}
 q^{\backslash b_2}(b_i,z_i=k) = \frac{a_k^{\backslash i}b_{k, w_{i1}}^{\backslash i}}{c_k^{\backslash i}(c_k^{\backslash i} + 1)},
\end{equation}
\begin{equation}
 q^{\backslash c}(b_i,z_i=k) = a_k^{\backslash i}b_{k, w_{i1}}^{\backslash i}b_{k, w_{i2}}^{\backslash i}.
\end{equation}

Recalling that the $\alpha$-divergence does not need the distributions to be normalized,
we can define the following local projections:

\begin{equation}
 \label{alpha-a}
 (a_k^{\backslash i})^*=
 \textrm{argmin}_{a_k}D_{\alpha}
 [(n_{\backslash i, k}+\gamma)q^{\backslash a, i}(B,z)||
 a_k^{\backslash i}q^{\backslash a, i}(B,z)],
\end{equation}
\begin{equation}
 \label{sdm-b1}
 (b_{k, w_{i1}}^{\backslash i})^*=\textrm{argmin}_{b_{k,w}}D_{\alpha}[(n_{\backslash i, w_{i1}|k}+\beta)q^{\backslash b_1, i}(B,z)||b_{k,w_{i1}}^{\backslash i}q^{\backslash b_1, i}(B,z)],
\end{equation}
\begin{equation}
 \label{sdm-b2}
 (b_{k, w_{i2}}^{\backslash i})^*=\textrm{argmin}_{b_{k,w}}D_{\alpha}[(n_{\backslash i, w_{i2}|k}+\beta)q^{\backslash b_2, i}(B,z)||b_{k,w_{i2}}^{\backslash i}q^{\backslash b_2, i}(B,z)],
\end{equation}
\begin{equation}
 (c_k^{\backslash i})^*=
  \textrm{argmin}_{c_k}D_{\alpha}
  [\frac{q^{\backslash c, i}(B,z)}{(n_{\backslash i, \cdot | k}+W\beta)(n_{\backslash i, \cdot | k}+W\beta + 1)}||\frac{q^{\backslash c, i}(B,z)}{c_k^{\backslash i}(c_{i}^{\backslash i} + 1)}],
\end{equation}
where

\begin{equation}
 q^{\backslash a, i}(B, z)=q^{\backslash a}(b_i,z_i) q(B_{\backslash i}, z_{\backslash i}),
\end{equation}
\begin{equation}
 q^{\backslash b_1, i}(B, z)=q^{\backslash b_1}(b_i,z_i)q(B_{\backslash i}, z_{\backslash i}),
\end{equation}
\begin{equation}
 q^{\backslash b_2, i}(B, z)=q^{\backslash b_2}(b_i,z_i)q(B_{\backslash i}, z_{\backslash i}),
\end{equation}
\begin{equation}
 q^{\backslash c, i}(B, z)=q^{\backslash c}(b_i,z_i)q(B_{\backslash i}, z_{\backslash i}).
\end{equation}

Taking the derivative of Eq. \eqref{alpha-a} with respect to $a^{\backslash i}$ and equating it to zero yields

\begin{equation}
 \sum_{z_{\backslash i}} q^{\backslash a, i}(b_i,z) - \frac{\sum_{z_{\backslash i}}(n_{\backslash i, k}+\gamma)^{\alpha}q^{\backslash a,i}(b_i,z)}{(a_k^{\backslash i})^{\alpha}} = 0.
\end{equation}
With $\sum_{z_{\backslash i}}q(z_{\backslash i})=1$,
we can obtain

\begin{equation}
 \begin{split}
  a_k^{\backslash i} & = [\sum_{z_{\backslash i}}(n_{\backslash i, k}+\gamma)^{\alpha}q(z_{\backslash i})]^{\frac1{\alpha}} \\
  & = \mathbb{E}_{q(z_{\backslash i})}[(n_{\backslash i, k}+\gamma)^{\alpha}]^{\frac1{\alpha}}.
 \end{split}
\end{equation}

Similarly, we can derive the solutions to the other optimization problems listed above as follows:

\begin{equation}
 b_{k, w}^{\backslash i} = \mathbb{E}_{q(z_{\backslash i})}[(n_{\backslash i, w|k}+\beta)^{\alpha}]^{\frac1{\alpha}},
\end{equation}
\begin{equation}
 c_k^{\backslash i} = \mathbb{E}_{q(z_{\backslash i})}
  \left[\left(
	 \frac1{n_{\backslash i, \cdot | k}+W\beta}
	\right)^{\alpha}\right]^{\frac1{\alpha}}.
\end{equation}

If we use $\alpha$-divergence projection with $\alpha =1$ for $a_k^{\backslash i}$ and $b_{k, w}^{\backslash i}$,
while using it with $\alpha=-1$ for $c_k^{\backslash i}$, we can obtain the update formula for $q(z_i)$ as

\begin{equation}
 \label{sdm-q-result}
 q(b_i,z_i=k)\propto
 (\mathbb{E}[n_{\backslash i, k}]+\gamma)
 \frac{(\mathbb{E}[n_{\backslash i, w_{i1}|k}]+\beta)(\mathbb{E}[n_{\backslash i, w_{i2}|k}]+\beta)}
 {(\mathbb{E}[n_{\backslash i, \cdot | k}]+W\beta)(\mathbb{E}[n_{\backslash i, \cdot | k}]+W\beta + 1)},
\end{equation}
which can also be obtained by SCVB0.

\subsection{Relation to LOO Likelihood}
Here,
we investigate the relationship between divergence minimization and the LOO likelihood.

For the LOO prediction of a new biterm $b_i$, its probability is given by

\begin{equation}
 \begin{split}
 p(b_i|B_{\backslash i}) & = \sum_{k=1}^K \mathbb{E}_{p(z_{\backslash i}|B_{\backslash i})} [p(b_i, z_i=k)|B_{\backslash i}, z_{\backslash i}] \\
  & = \sum_{k=1}^K \mathbb{E}_{p(z_{\backslash i}|B_{\backslash i})}
  \left[\frac{n_{\backslash i, k}+\gamma}{(N_B-1)+K\gamma}
   \frac{(n_{\backslash i, w_{i1}|k}+\beta)(n_{\backslash i, w_{i2}|k}+\beta)}{(n_{\backslash i, \cdot|k}+W\beta(n_{\backslash i, \cdot|k}+W\beta + 1))}\right].
 \end{split}
\end{equation}
On the other hand,
using Eq. \eqref{sdm-q-result}, we get

\begin{equation}
 \begin{split}
  q(b_i) & = \sum_{k=1}^K q(b_i, z_i=k) \\
  & = \sum_{k=1}^K
  \frac{\mathbb{E}[n_{\backslash i, k}+\gamma]}{(N_B-1)+K\gamma}
  \frac{(\mathbb{E}[n_{\backslash i, w_{i1}|k}]+\beta)(\mathbb{E}[n_{\backslash i, w_{i2}|k}]+\beta)}
  {(\mathbb{E}[n_{\backslash i, \cdot | k}]+W\beta)(\mathbb{E}[n_{\backslash i, \cdot | k}]+W\beta + 1)}.
 \end{split}
\end{equation}
This full likelihood is similar to the above LOO likelihood and can be regarded as an approximation to it.
Considering the close relation between the LOO likelihood and the full likelihood,
lowering the full likelihood indicated by $q(b_i)$ can result in a lower likelihood of the data set.
The close correlation between the LOO perplexity and test perplexity of LDA has been shown with detailed experimental results in \citet{sdm}.

\subsection{Derivation of SDM}
For the three terms we defined in $\alpha$-divergence minimization,
we have $\mathbb{E}[n_{\backslash i|k}] = 2\,\mathbb{E}[n_{\backslash i,\cdot|k}]$.
Therefore, $a_k^{\backslash i}$ can be restored from $c_k^{\backslash i}$ and we need only to compute the values of $b_k^{\backslash i}$.
$c_k^{\backslash i}$ can be calculated as $c_k^{\backslash i} = \sum_{w=1}^Wb_{k, w}^{\backslash i}$,
so we can update it by $c_k^{\backslash i} \leftarrow c_k^{\backslash i} - (b_{k,w}^{\backslash i})^{(\textrm{old})} + (b_{k,w}^{\backslash i})^{(\textrm{new})}$.
For this reason,
we will focus on the stochastic approximation for the term $b_{k,w}^{\backslash i}$.

Recall that $b_{k,w}^{\backslash i} = \mathbb{E}[n_{\backslash i, w|k}]+\beta$.
We can rewrite it in the form of fixed point iteration with step size $\rho_t$:

\begin{equation}
 \begin{split}
  (b_{k,w}^{\backslash i})^{(t+1)} & = (1-\rho_t) (b_{k,w}^{\backslash i})^{(t)} + \rho_t(\mathbb{E}[n_{\backslash i,w|k}]+\beta) \\
  & = (b_{k,w}^{\backslash i})^{(t)} + \rho_t(\mathbb{E}[n_{\backslash i,w|k}] + \beta - (b_{k,w}^{\backslash i})^{(t)}).
 \end{split}
\end{equation}

We can then replace the term $\mathbb{E}[n_{\backslash i,w|k}]$ with approximation $\tilde n_{\backslash i,w|k}$,
which is defined by

\begin{equation}
 \tilde n_{\backslash i,w|k} = (n_w - 1)q(z_{i'} = k|b_{i'}),
\end{equation}
where $i' \neq i$ is a random sample with $b_{i'}$ also containing the word $w$.
We can then substitute it into the update formula:

\begin{equation}
 \label{sdm-b}
 (b_{k,w}^{\backslash i})^{(t+1)} = (b_{k,w}^{\backslash i})^{(t)} + \rho_t(\tilde n_{\backslash i,w|k} + \beta - (b_{k,w}^{\backslash i})^{(t)}).
\end{equation}

This update formula is actually a realization of update based on word vocabulary.
Furthermore, we process all the biterms in an one-pass fashion,
which means that each biterm is processed only once one by one until the end.
For this reason,
we focus on the perspective of word type update and reformulate the update as

\begin{equation}
 \label{sdm-update-b}
 b_{k,w}^{t(w)+1} = b_{k,w}^{t(w)} + \rho_{t(w)}(\tilde n_{\backslash i,w|k} + \beta - (b_{k,w}^{\backslash i})^{t(w)}),
\end{equation}
where $t(w)$ represents how many times the word $w_j$ has been updated so far and $\rho_{t(w)} = 1/(1+t(w))^\kappa$.
In the context of stochastic approximation,
we can simply make $b_{k,w}^{t(w)+1} = (b_{k,w}^{\backslash i})^{(t+1)}$.

The detailed procedure is described in Alg. \ref{alg:sdm}.

\begin{algorithm}
 \caption{SDM BTM Algorithm}
 \label{alg:sdm}
 \begin{algorithmic}
  \State Randomly initialize $b_{k,w}$ and compute $c_{k}$
  \For{$b_i \in B$}
   \For{$w \in b_i$}
    \State Update $b_{k,w}^{t(w)}$ using Eq. \eqref{sdm-update-b}
    \State Update $c_w$
    \State Increment $t(w)$
   \EndFor
  \EndFor
  \State Compute global parameters $\theta$ and $\Phi$.
 \end{algorithmic}
\end{algorithm}

\section{Experiments}
We evaluate the effect of our proposed algorithm compared with existing inference algorithms.

\subsection{Experimental Settings}
We used the data set called \textit{Tweets2011} (http://trec.nist.gov/data/tweets) to evaluate the algorithms.
Tweets2011 is a standard collection of tweets published between January 23rd and February 8th, 2011.

The raw data of Tweets2011 is very noisy and contains tweets in multiple languages.
Many of the languages are difficult to perform morphological analysis such as Japanese.
Therefore, in the preprocessing stage,
we applied a filter to keep only English tweets.
Then, we implemented essential preprocessing tasks such as stop word removal and punctuation removal.
Finally, we removed documents consisting of only a single word since they can not form biterms and have no co-occurrence between different words.

After preprocessing,
as we can see from the document length distribution shown in Fig. \ref{fig:length},
most of the documents have length less than 10.
There are around $3.5$ million documents and the average document length is $6.94$ words.

\begin{figure}
 \centering
 \epsfig{figure=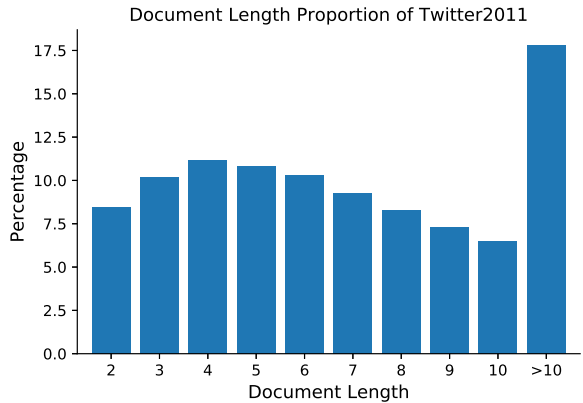,width=.7\textwidth}
 \caption{Document length distribution in Tweets2011.
 Most of the documents are shorter than 10 words.}
 \captionsetup{justification=centering}
 \label{fig:length}
\end{figure}

In most real world situations,
the size of a given data set is too large to run a batch algorithm.
Therefore, to better simulate practical circumstances,
we considered the four algorithms, namely,
the proposed SDM algorithm,
the SCVB0 algorithm for BTM,
the online BTM algorithm,
and the incremental BTM algorithm,
that can scale up to large data sets.

In order to show the convergence speed of each algorithm,
and to compare all of the above algorithms at the same scale,
we let the programs output the result together with the proportion of processed biterms from the beginning of each experiment.
All of the experiments are conducted on an Ubuntu server with 2.9 GHz Intel Xeon E5-2667 CPU and 64 GB memory.

After the preprocessing,
we obtained $3.5$ million short documents.
We then formulated around $95$ million biterms from these short documents.
We shuffled all of the biterms and divided them into two data sets:
the training set and the test set.
Their sizes are set to be around $4:1$.
It turns out that the training set contains about $75$ million biterms,
and the test set contains about $20$ million biterms.

For three existing methods,
hyperparameters are set in the same way as the original paper \citep{btm}.
To assure both the CGS algorithms finish at a realistic time,
we set the length of the rejuvenation sequence to $10$ for the incremental BTM algorithm and
the inner iteration times to $10$ for the online BTM algorithm.
For all of the algorithms,
we set $\gamma$ to be $50/K$ and $\beta$ to be $0.01$.
As step size hyperparameters,
$\tau = 1000$ and $\kappa = 0.8$ are used for SCVB0,
while $\kappa = 0.51$ is used for SDM.

\citet{btm} chose to evaluate the coherence of observed topics to measure the performance of each algorithm,
which requires us to use external data from sources such as Wikipedia for evaluating the pointwise mutual information.
This result can be dependent on the size and the quality of the external data.
Therefore,
we decided to use the strategy called predictive sample re-use \citep{psr} to evaluate the efficiency of each algorithm.
This evaluation strategy simply measures the log likelihood sum of the test data set based on the model parameters calculated by each algorithm.

\subsection{Evaluation}
We demonstrate and discuss the performance of four inference algorithms.
Each set of experiments is repeated 10 times.

\begin{figure}
 \centering
 \epsfig{figure=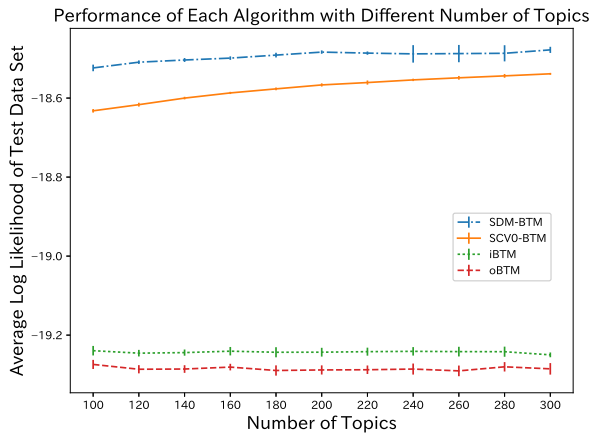,width=.7\textwidth}
 \caption{Average test-set likelihood after processing all data once.}
 \captionsetup{justification=centering}
 \label{fig:all}
\end{figure}

\begin{figure}
 \centering
 \epsfig{figure=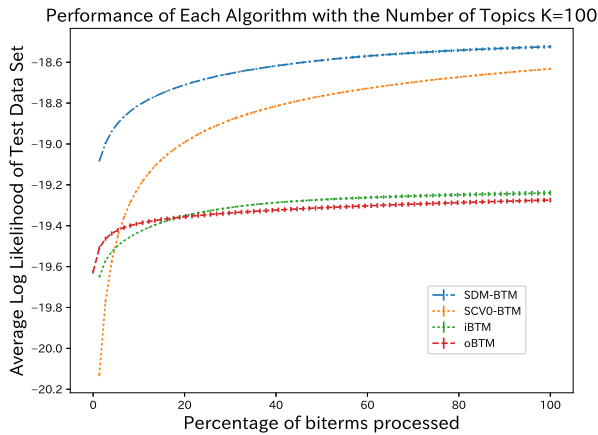,width=.7\textwidth}
 \caption{Average test set likelihood trend with the number of topics K = 100.}
 \captionsetup{justification=centering}
 \label{fig:case}
\end{figure}

In Fig. \ref{fig:all},
the horizontal axis represents experimental settings under different topic numbers.
In Fig. \ref{fig:case},
the horizontal axis represents the percentage of documents that have been processed.
In both of the figures,
the vertical axis represents the mean value of the average log likelihood of the test set,
which is the higher the more favorable.
Their standard deviations are also shown by error bars over 10 runs.
SDM-BTM represents the proposed SDM algorithm for BTM.
SCVB0-BTM represents the SCVB0 algorithm for BTM.
iBTM represents the incremental BTM algorithm.
oBTM represents the online BTM algorithm.

The comparison of the update cost, memory and purpose between different algorithms is shown in Tab. \ref{table:res},
where $R$ denotes the length of the rejuvenation sequence and $B_t$ denotes the size of a biterm mini-batch.

\begin{table}[h]
\caption{Comparison Between Algorithms.}
\label{table:res}
\centering
 \begin{tabular}{|c|c|c|c|c|}
  \hline
  & Update Cost & Memory & Purpose \\ [0.5ex]
  \hline
  iBTM & $\mathcal{O}(R)$ & $\mathcal{O}(K(1+W)+B_t)$ & Post. Approx.\\
  \hline
  oBTM & $\mathcal{O}(B_t(1+KW))$ & $\mathcal{O}(K(1+W)+N_B)$ & Post. Approx.\\
  \hline
  SCVB0-BTM & $\mathcal{O}(K)$ & $\mathcal{O}(K(1+W))$ & Post. Approx. \\
  \hline
  SDM-BTM & $\mathcal{O}(K)$ & $\mathcal{O}(K(1+W))$ & LOO Est.\\
  \hline
 \end{tabular}
\end{table}

As shown in Fig. \ref{fig:all} and Fig. \ref{fig:case},
compared to online algorithms based on CGS,
SDM-BTM shows a higher convergence speed and a better convergence result.
This is because SDM-BTM is a deterministic algorithm while CGS algorithms are based on sampling.
Compared to SCVB0-BTM,
SDM-BTM performs better on a real world data set because SDM-BTM
preserves the sufficient statistics correctly
and
is not exposed to the risk of arithmetic underflow.

\section{Conclusions}
In this paper,
we first reviewed the BTM and its existing inference algorithms.
We then reconstructed CVB0 inference for BTM and proposed a novel SDM inference algorithm,
which is a stochastic inference algorithm that can be applied to practical circumstances.
It outperformed existing methods in our experiments.

For future work,
it would be interesting and essential to explore the relationship between the number of topics and the performance.
Developing new inference algorithms based on other $\alpha$-divergences or
conducting experiments on various data sets and studying their results would also be important challenges.


\bibliographystyle{spbasic}
\bibliography{ms}

\begin{thebibliography}{15}
\providecommand{\natexlab}[1]{#1}
\providecommand{\url}[1]{{#1}}
\providecommand{\urlprefix}{URL }
\expandafter\ifx\csname urlstyle\endcsname\relax
  \providecommand{\doi}[1]{DOI~\discretionary{}{}{}#1}\else
  \providecommand{\doi}{DOI~\discretionary{}{}{}\begingroup
  \urlstyle{rm}\Url}\fi
\providecommand{\eprint}[2][]{\url{#2}}

\bibitem[{AlSumait et~al(2008)AlSumait, Barbará, and Domeniconi}]{online-lda}
AlSumait L, Barbará D, Domeniconi C (2008) On-line lda: Adaptive topic models
  for mining text streams with applications to topic detection and tracking. pp
  3--12

\bibitem[{Amari(1990)}]{amari}
Amari Si (1990) Differential-geometrical methods in statistics. Lecture notes
  in statistics, Springer-Verlag, Berlin, Heidelberg

\bibitem[{Asuncion et~al(2009)Asuncion, Welling, Smyth, and Teh}]{Asuncion}
Asuncion A, Welling M, Smyth P, Teh YW (2009) On smoothing and inference for
  topic models. In: Proceedings of the Twenty-Fifth Conference on Uncertainty
  in Artificial Intelligence, AUAI Press, Arlington, Virginia, United States,
  UAI '09, pp 27--34,
  \urlprefix\url{http://dl.acm.org/citation.cfm?id=1795114.1795118}

\bibitem[{Awaya et~al(2016)Awaya, Kitazono, Omori, and Ozawa}]{naru}
Awaya N, Kitazono J, Omori T, Ozawa S (2016) Stochastic collapsed variational
  bayesian inference for biterm topic model. In: 2016 International Joint
  Conference on Neural Networks (IJCNN), pp 3364--3370

\bibitem[{Beal(2003)}]{beal}
Beal MJ (2003) Variational algorithms for approximate bayesian inference

\bibitem[{Bertsekas and Tsitsiklis(2000)}]{theory}
Bertsekas DP, Tsitsiklis JN (2000) Gradient convergence in gradient methods
  with errors. {SIAM} Journal on Optimization 10(3):627--642

\bibitem[{Blei et~al(2003)Blei, Ng, and Jordan}]{lda}
Blei DM, Ng AY, Jordan MI (2003) Latent dirichlet allocation. J Mach Learn Res
  3:993--1022

\bibitem[{Canini et~al(2009)Canini, Shi, and Griths}]{inc-lda}
Canini KR, Shi L, Griths TL (2009) Online inference of topics with latent
  dirichlet allocation. In: Proceedings of AI Stats

\bibitem[{Cheng et~al(2014)Cheng, Yan, Lan, and Guo}]{btm}
Cheng X, Yan X, Lan Y, Guo J (2014) Btm: Topic modeling over short texts. IEEE
  Transactions on Knowledge and Data Engineering 26(12):2928--2941

\bibitem[{Foulds et~al(2013)Foulds, Boyles, DuBois, Smyth, and
  Welling}]{foulds}
Foulds J, Boyles L, DuBois C, Smyth P, Welling M (2013) Stochastic collapsed
  variational bayesian inference for latent dirichlet allocation. In:
  Proceedings of the 19th ACM SIGKDD International Conference on Knowledge
  Discovery and Data Mining, ACM, New York, NY, USA, KDD '13, pp 446--454

\bibitem[{Geisser(1975)}]{psr}
Geisser S (1975) The predictive sample reuse method with applications. Journal
  of the American Statistical Association 70(350):320--328

\bibitem[{Hofmann(1999)}]{plsa}
Hofmann T (1999) Probabilistic latent semantic indexing pp 50--57

\bibitem[{Minka(2005)}]{minka}
Minka T (2005) Divergence measures and message passing

\bibitem[{Sato and Nakagawa(2012)}]{cvb}
Sato I, Nakagawa H (2012) Rethinking collapsed variational bayes inference for
  {LDA}

\bibitem[{Sato and Nakagawa(2015)}]{sdm}
Sato I, Nakagawa H (2015) Stochastic divergence minimization for online
  collapsed variational bayes zero inference of latent dirichlet allocation pp
  1035--1044

\end{thebibliography}

\section*{Appendix: Convergence Proof}
In this appendix,
we will show that the proposed SDM algorithm does converge to the minimization of the divergence.
This is not obvious since the term used in Eq. \eqref{sdm-b} is not a gradient of the divergence in Eq. \eqref{sdm-b1} or Eq. \eqref{sdm-b2}.

We will proof the convergence of the stochastic approximation by Eq. \eqref{sdm-b} for the optimization problem in Eq. \eqref{sdm-b1}.
The proof for Eq. \eqref{sdm-b2} is similar, thus it would be omitted.
The whole process is similar to but different from the appendix of \citet{sdm}.

First,
we repeat the definition of the stochastic optimization problem based on Martingale convergence theory.
The optimization problem is to find $b^* = \textrm{argmin}_b f(b)$ and
the update formula is $b^{(t+1)} = b^{(t)} + \rho_t s^{(t)}$.
We also let $\mathcal{F}^{(t)}$ to be the history of the variable sequence,
which is also called \textit{filtration} :

\begin{equation}
 \mathcal{F}^{(t)} = \{b^{(1)}, \cdots, b^{(t)}, s^{(1)}, \cdots, s^{(t-1)}, \rho_1, \cdots, \rho_t\}.
\end{equation}

Then we have the Martingale theory as follows.

\begin{theorem}
 \label{theorem}
 Assume step size $\rho_t$, function $f$ and stochastic search direction $s^{(t)}$ satisfy following four conditions.
 \begin{enumerate}
  \item Step size $\rho_t$ is a non-negative scalar and satisfies
	\begin{equation}
	 \lim_{t\to \infty}\rho_t =0,\, \sum_{t=1}^{\infty}\rho_t=\infty,\,\sum_{t=1}^{\infty}(\rho_t)^2<\infty.
	\end{equation}
  \item Function $f$ is continuously differentiable and there exists come constant $L$ such that
	\begin{equation}
	 ||\nabla f(b) - \nabla f(\bar b)|| \leq L||b-\bar b||, \forall b,\, \bar b \in \mathbb{R}^K.
	\end{equation}
  \item There exists a positive constant $C$ such that
	\begin{equation}
	 C\nabla f(b^{(t)})^\top\mathbb{E}[s^{(t)}|\mathcal{F}^{(t)}]\leq-||\nabla f(b^{(t)})||^2,\,\forall t>0.
	\end{equation}
  \item There exist positive constants $A$ and $B$ such that
	\begin{equation}
	 \mathbb{E}[||s^{(t)}||^2|\mathcal{F}^{(t)}]\leq A+B||\nabla f(b^{(t)})||^2,\, \forall t>0.
	\end{equation}
 \end{enumerate}
 Then the update equation $b^{(t+1)} = b^{(t)} + \rho_t s^{(t)}$ satisfies the three holds with probability one.
 \begin{enumerate}
  \item The sequence $f(b^{(t)})$ converges.
  \item $\lim_{t\to\infty}\nabla f(b^{(t)})=0$.
  \item Every limit point of $b^{(t)}$ is a stationary point of $f$.
 \end{enumerate}
\end{theorem}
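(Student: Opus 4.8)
We prove the theorem by the classical stochastic-approximation argument for gradient methods with errors, combining the descent lemma with a supermartingale convergence theorem; the structure parallels the appendix of \citet{sdm} and goes back to Bertsekas--Tsitsiklis. First I would use the Lipschitz condition (Condition~2) to write the descent inequality
\begin{equation}
 f(b^{(t+1)}) \le f(b^{(t)}) + \nabla f(b^{(t)})^\top (b^{(t+1)}-b^{(t)}) + \frac{L}{2}\|b^{(t+1)}-b^{(t)}\|^2 ,
\end{equation}
substitute $b^{(t+1)}-b^{(t)} = \rho_t s^{(t)}$, and take the conditional expectation given the filtration $\mathcal{F}^{(t)}$. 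Bounding the cross term from above via Condition~3 by $-\frac{\rho_t}{C}\|\nabla f(b^{(t)})\|^2$ and the quadratic term via Condition~4 by $\frac{L}{2}\rho_t^2(A + B\|\nabla f(b^{(t)})\|^2)$, I obtain the key recursion
\begin{equation}
 \mathbb{E}[f(b^{(t+1)})\mid\mathcal{F}^{(t)}] \le f(b^{(t)}) - \Big(\tfrac{\rho_t}{C} - \tfrac{LB}{2}\rho_t^2\Big)\|\nabla f(b^{(t)})\|^2 + \tfrac{LA}{2}\rho_t^2 .
\end{equation}

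Since $\sum_t \rho_t^2 < \infty$ by Condition~1, the coefficient $\frac{\rho_t}{C} - \frac{LB}{2}\rho_t^2$ is eventually positive, so beyond some index the iterate-dependent term is a genuine negative drift while $\frac{LA}{2}\rho_t^2$ is summable. I would then apply the Robbins--Siegmund supermartingale convergence theorem to the non-negative process $f(b^{(t)})$ (non-negativity holds here because $f$ is a divergence), which yields simultaneously that $f(b^{(t)})$ converges almost surely, establishing Conclusion~1, and that $\sum_t \rho_t \|\nabla f(b^{(t)})\|^2 < \infty$ almost surely. Combined with $\sum_t \rho_t = \infty$, the latter forces $\liminf_{t\to\infty}\|\nabla f(b^{(t)})\| = 0$.

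The main obstacle is upgrading this $\liminf$ to the full limit in Conclusion~2. Here I would run the standard two-threshold argument: suppose for contradiction that for some $\varepsilon>0$ the quantity $\|\nabla f(b^{(t)})\|$ exceeds $\varepsilon$ infinitely often while also returning below $\varepsilon/2$ infinitely often; count the number of steps needed for an excursion from one level to the other using $\|b^{(t+1)}-b^{(t)}\| = \rho_t\|s^{(t)}\|$ together with the Lipschitz bound on $\nabla f$, and show each excursion consumes a non-vanishing amount of $\sum_t \rho_t\|\nabla f(b^{(t)})\|^2$, contradicting its finiteness. The delicate point is controlling $\|s^{(t)}\|$ along these excursions, since Condition~4 only bounds its conditional second moment; one supplements the deterministic counting with the almost-sure finiteness of $\sum_t \rho_t^2\|s^{(t)}\|^2$, itself a by-product of the same supermartingale analysis, which also gives $\|b^{(t+1)}-b^{(t)}\|\to 0$. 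Finally, Conclusion~3 is immediate: if $b$ is a limit point of $b^{(t)}$ along a subsequence, continuity of $\nabla f$ (Condition~2) together with Conclusion~2 gives $\nabla f(b)=0$, so $b$ is a stationary point of $f$.
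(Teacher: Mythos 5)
Your outline is correct, but be aware that the paper itself does not prove this theorem at all: it is quoted as a known result, and the proof is deferred in one sentence to the supermartingale convergence theorem of \citet{theory} (the classical Bertsekas--Tsitsiklis result on gradient methods with errors). What you propose is essentially a reconstruction of that cited proof: the descent inequality from the Lipschitz condition, conditional expectation with Conditions 3 and 4 to obtain the almost-supermartingale recursion, Robbins--Siegmund to get almost-sure convergence of $f(b^{(t)})$ together with $\sum_t \rho_t\|\nabla f(b^{(t)})\|^2<\infty$, hence $\liminf_t\|\nabla f(b^{(t)})\|=0$, and then the two-threshold excursion argument to upgrade the liminf to a full limit, with Conclusion 3 following from continuity of $\nabla f$. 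Two of your remarks add genuine value relative to the paper's bare citation: first, you correctly note that Conclusion 1 implicitly requires $f$ to be bounded below, a hypothesis the theorem statement omits and which you repair by observing that in the intended application $f$ is a divergence, hence non-negative; second, you identify the liminf-to-lim upgrade as the only delicate step and propose controlling the excursions through the almost-sure finiteness of $\sum_t\rho_t^2\|s^{(t)}\|^2$, which indeed follows from Condition 4, $\sum_t\rho_t^2<\infty$, and the already-established summability of $\rho_t\|\nabla f(b^{(t)})\|^2$ via a conditional Borel--Cantelli argument --- exactly the mechanism of the cited proof. In short, your route buys a self-contained argument (at the cost of the lengthy excursion analysis, which you only sketch), while the paper's approach buys brevity by outsourcing the entire proof to \citet{theory}; the two are consistent.
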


The detailed proof of above theory can be found in the super-Martingale convergence theorem \citep{theory}.

First, we reform the stochastic update formula.
Given the objective function
\begin{equation}
 \begin{split}
  D_1(b_{k, w_{i1}}^{\backslash i})
  & = D_1[(n_{\backslash i, w_{i1}|k}+\beta)q^{\backslash b_1, i}(B,z)||b_{z_i,w_{i1}}^{\backslash i}q^{\backslash b_1, i}(B,z)] \\
  & = D_1[(n_{\backslash i, w_{i1}|k}+\beta)q^{\backslash b_1}(b_i,z_i)q(B_{\backslash i}, z_{\backslash i})||
  b_{z_i,w_{i1}}^{\backslash i}q^{\backslash b_1}(b_i,z_i)q(B_{\backslash i}, z_{\backslash i})] \\
 \end{split}
\end{equation}

We take its derivatives regarding to $b_{k, w_{i1}}^{\backslash i}$:

\begin{equation}
 \label{partial}
 \frac{\partial}{\partial b_{k, w_{i1}}^{\backslash i}} D_1(b_{k, w_{i1}}^{\backslash i}) =
 - \frac{q^{\backslash b_1}(b_i, z_i)}{b_{k, w_{i1}}^{\backslash i}}
 (\mathbb{E}[n_{\backslash i, w_{i1}|k}] + \beta - b_{k, w_{i1}}^{\backslash i}).
\end{equation}

Next, we define the stochastic direction

\begin{equation}
 \begin{split}
  s_{i', k} & = n_{\backslash i, w_{i1}} q(z_{i'}=k|b_{i'}) + \beta - b_{k, w_{i1}}^{\backslash i}\\
  & = \mathbb{E}[n_{\backslash i, w_{i1}|k}] + \beta - b_{k, w_{i1}}^{\backslash i} + \xi_{i',k},
 \end{split}
\end{equation}
where $\xi_{i', k} = n_{\backslash i, w_{i1}} q(z_{i'}=k|b_{i'}) - \mathbb{E}[n_{\backslash i, w_{i1}|k}]$ and
$b_{i'}$ is another biterm that contains the word $w_{i1}$.
Then we rewrite the stochastic direction as

\begin{equation}
 s_{i', k} = \frac{b_{k, w_{i1}}^{\backslash i} c_k^{\backslash i}(c_k^{\backslash i}+1)}{a_k^{\backslash i}b_{k, w_{i2}}^{\backslash i}}
  \frac{\partial}{\partial b_{k, w_{i1}}^{\backslash i}} D_1(b_{k, w_{i1}}^{\backslash i}) + \xi_{i', k}.
\end{equation}

The step size we actually use satisfies the first condition of the theorem \ref{theorem} and
it is not hard to show that the objective function $D_1(b_{k, w_{i1}}^{\backslash i})$ of minimization satisfies the second condition.
Therefore,
what left is to show the satisfaction of the third and forth condition.
From the paper that introduced stochastic divergence minimization for LDA \citep{sdm},
the proof of following lemma can be found.

\begin{lemma}
 \label{xi}
 If the stochastic noise term $\xi^{(t)}$ satisfies the following conditions,
 then the stochastic direction $s^{(t)}$ satisfies the third and forth condition of the theorem \ref{theorem}.
 \begin{enumerate}
  \item $\{\xi^{(t)}\}$ is a Maringale difference sequence with respect to filtration $\mathcal{F}^{(t)}$,
	which means that $\mathbb{E}[\xi^{(t)}|\mathcal{F}^{(t)}]=0,\,\forall t>0$.
  \item $\xi^{(t)}$ has bounded variance.
	For example, it is square integrable with
	\begin{equation}
	 \mathbb{E}[||\xi^{(t)}||^2|\mathcal{F}^{(t)}]\leq C(1+||\nabla D_1(b_{k, w_{i1}}^{(t)})||^2),\,\forall t>0,
	\end{equation}
	for some constant $C$.
 \end{enumerate}
\end{lemma}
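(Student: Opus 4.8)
The plan is to read off conditions~3 and~4 of Theorem~\ref{theorem} directly from the algebraic form of the stochastic direction $s^{(t)}=(s_{i',k})_{k=1}^{K}$, invoking the two hypotheses of the lemma only where the noise enters, and thereby reduce the whole verification to a single uniform bound on a positive diagonal rescaling. Let $b^{(t)}$ collect the current iterates $b_{k,w_{i1}}^{\backslash i}$, $k=1,\dots,K$, and write $\nabla D_1(b^{(t)})$ for the vector with $k$-th entry $\partial_{b_{k,w_{i1}}^{\backslash i}}D_1$ as in \eqref{partial}. Since $s_{i',k}=\mathbb{E}[n_{\backslash i,w_{i1}|k}]+\beta-b_{k,w_{i1}}^{\backslash i}+\xi_{i',k}$, the first three summands are $\mathcal{F}^{(t)}$-measurable, and $\mathbb{E}[\xi^{(t)}|\mathcal{F}^{(t)}]=0$ by the first hypothesis, combining with \eqref{partial} gives
\begin{equation}
 \mathbb{E}[s^{(t)}|\mathcal{F}^{(t)}]=-H^{(t)}\nabla D_1(b^{(t)}),
\end{equation}
where $H^{(t)}=\mathrm{diag}(h_1^{(t)},\dots,h_K^{(t)})$ has strictly positive entries $h_k^{(t)}=b_{k,w_{i1}}^{\backslash i}\,c_k^{\backslash i}(c_k^{\backslash i}+1)/(a_k^{\backslash i}\,b_{k,w_{i2}}^{\backslash i})$. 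Everything then hinges on the uniform bounds $0<h_{\min}\le h_{\max}<\infty$ with $h_{\min}:=\inf_{t,k}h_k^{(t)}$ and $h_{\max}:=\sup_{t,k}h_k^{(t)}$.

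Granting these bounds, condition~3 holds with $C:=1/h_{\min}$: since $\nabla D_1(b^{(t)})^{\top}\mathbb{E}[s^{(t)}|\mathcal{F}^{(t)}]=-\sum_{k}h_k^{(t)}\bigl(\partial_{b_{k,w_{i1}}^{\backslash i}}D_1\bigr)^2$ and $Ch_k^{(t)}\ge 1$ for each $k$, we get $C\,\nabla D_1(b^{(t)})^{\top}\mathbb{E}[s^{(t)}|\mathcal{F}^{(t)}]\le-||\nabla D_1(b^{(t)})||^2$. For condition~4, write $s^{(t)}=-H^{(t)}\nabla D_1(b^{(t)})+\xi^{(t)}$, expand $||s^{(t)}||^2$, take the conditional expectation, and drop the cross term by the first hypothesis:
\begin{equation}
 \begin{split}
  \mathbb{E}[||s^{(t)}||^2|\mathcal{F}^{(t)}]
   &=||H^{(t)}\nabla D_1(b^{(t)})||^2+\mathbb{E}[||\xi^{(t)}||^2|\mathcal{F}^{(t)}]\\
   &\le h_{\max}^2\,||\nabla D_1(b^{(t)})||^2+C'\bigl(1+||\nabla D_1(b^{(t)})||^2\bigr),
 \end{split}
\end{equation}
the last step being the second hypothesis with its constant renamed $C'$; this is condition~4 with $A:=C'$ and $B:=C'+h_{\max}^2$.

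The only step with genuine content is therefore the uniform control of $h_k^{(t)}$, which I would obtain by showing that $a_k^{\backslash i}$, each $b_{k,w}^{\backslash i}$, and $c_k^{\backslash i}$ stay in fixed compact intervals bounded away from zero, uniformly in $t$. For the $b_{k,w}^{\backslash i}$ this is an a priori stability estimate for the recursion \eqref{sdm-b}--\eqref{sdm-update-b}: the step sizes $\rho_{t(w)}=1/(1+t(w))^{\kappa}$ lie in $(0,1]$, so every update is a convex combination of the previous value with $\tilde n_{\backslash i,w|k}+\beta$, and $\tilde n_{\backslash i,w|k}=(n_w-1)q(z_{i'}=k|b_{i'})\in[0,n_w-1]$; hence, for any admissible initialization, $\beta\le b_{k,w}^{\backslash i}\le N_{\max}:=\max_w n_w-1+\beta$ for all $t$. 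It follows that $c_k^{\backslash i}=\sum_w b_{k,w}^{\backslash i}\in[W\beta,\,WN_{\max}]$, and, using $\mathbb{E}[n_{\backslash i|k}]=2\,\mathbb{E}[n_{\backslash i,\cdot|k}]$ so that $a_k^{\backslash i}=2(c_k^{\backslash i}-W\beta)+\gamma$, also $\gamma\le a_k^{\backslash i}\le 2WN_{\max}-2W\beta+\gamma$. Since $h_k^{(t)}$ is a ratio whose numerator and denominator are each products of finitely many of these bounded, strictly positive quantities, it is squeezed between two strictly positive constants, so $0<h_{\min}\le h_{\max}<\infty$ and conditions~3 and~4 follow. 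I expect this a priori boundedness of $\{b_{k,w}^{\backslash i}\}$ to be the only real obstacle; everything else is the same martingale bookkeeping as in the LDA analysis of \citet{sdm}.
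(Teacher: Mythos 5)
Your argument is correct, but note that the paper itself never proves this lemma: it imports it wholesale from the LDA stochastic divergence minimization paper \citep{sdm} ("the proof of following lemma can be found" there) and only supplies a proof of the follow-up lemma that the noise term $\xi^{(t)}$ actually satisfies the two hypotheses. Your write-up is therefore a self-contained replacement for that citation, and it follows the route one would expect: write $\mathbb{E}[s^{(t)}|\mathcal{F}^{(t)}]=-H^{(t)}\nabla D_1(b^{(t)})$ with $H^{(t)}$ a positive diagonal rescaling (your minus sign is the one consistent with Eq.~\eqref{partial}; the paper's displayed formula for $s_{i',k}$ drops it), get condition~3 from a uniform lower bound $h_{\min}>0$, and get condition~4 by expanding $\|s^{(t)}\|^2$, killing the cross term with the martingale-difference hypothesis, and invoking the variance hypothesis. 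The genuinely new content you add is the a priori stability estimate: since each update is a convex combination with a target in $[\beta,\,\max_w n_w-1+\beta]$, the iterates $b_{k,w}^{\backslash i}$, and hence $c_k^{\backslash i}$ and $a_k^{\backslash i}$, stay in fixed positive compact intervals, which is exactly what makes $0<h_{\min}\le h_{\max}<\infty$ legitimate; the paper's appendix implicitly relies on this same boundedness when it asserts the existence of the constants $C$ and $G$ in its own lemma but never states it, so your version is arguably the more careful one. Two immaterial slips: the relation between $a_k^{\backslash i}$ and $c_k^{\backslash i}$ should be $a_k^{\backslash i}=(c_k^{\backslash i}-W\beta)/2+\gamma$, since each biterm contributes two word tokens (you copied the paper's inverted factor of two), and your interval bounds require the random initialization to lie in the stated range, which you correctly flag as an admissibility assumption; neither affects the boundedness conclusion or the lemma.
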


Therefore, what left is to show the following lemma.

\begin{lemma}
 The noise term $\xi^{(t)}$ satisfies the two conditions listed in the Lemma \ref{xi}.
\end{lemma}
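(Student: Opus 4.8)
The plan is to verify the two conditions of Lemma~\ref{xi} directly for the noise term
$\xi_{i',k} = n_{\backslash i, w_{i1}}\, q(z_{i'}=k|b_{i'}) - \mathbb{E}[n_{\backslash i, w_{i1}|k}]$,
where $i'$ is the random index (a biterm containing $w_{i1}$) drawn at step $t$.
First I would make precise what the filtration $\mathcal{F}^{(t)}$ contains: all the parameter values $b_{k,w}^{(\cdot)}$ and $c_k^{(\cdot)}$ up to time $t$, together with the previously used search directions and step sizes, but \emph{not} the fresh random choice of $i'$ at step $t$. With that convention, the first condition ($\mathbb{E}[\xi^{(t)}|\mathcal{F}^{(t)}]=0$) is essentially a definition-chasing exercise: taking the conditional expectation over the uniform draw of $i'$ among the biterms containing $w_{i1}$, the average of $n_{\backslash i, w_{i1}}\, q(z_{i'}=k|b_{i'})$ is exactly the quantity that $\mathbb{E}[n_{\backslash i, w_{i1}|k}]$ was defined to approximate in Section~3.3, so the two terms cancel and $\{\xi^{(t)}\}$ is a martingale difference sequence. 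I would spell out this averaging step carefully, since it is the one place where the specific sampling rule ``pick another biterm $b_{i'}$ containing $w_{i1}$'' enters.

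For the second condition I would bound $\mathbb{E}[\|\xi^{(t)}\|^2\mid\mathcal{F}^{(t)}]$ by a constant. The key observation is that every factor appearing in $\xi_{i',k}$ is bounded: $q(z_{i'}=k|b_{i'})\in[0,1]$ since it is a probability, and $n_{\backslash i,w_{i1}}$ (the number of biterms containing $w_{i1}$, minus one) is at most $N_B$, a fixed finite constant for the given corpus; likewise $\mathbb{E}[n_{\backslash i,w_{i1}|k}]\le N_B$. Hence $|\xi_{i',k}|\le 2N_B$ deterministically, so $\mathbb{E}[\|\xi^{(t)}\|^2\mid\mathcal{F}^{(t)}]\le 4KN_B^2 =: C_0$, which certainly satisfies $\mathbb{E}[\|\xi^{(t)}\|^2\mid\mathcal{F}^{(t)}]\le C(1+\|\nabla D_1(b^{(t)}_{k,w_{i1}})\|^2)$ for $C=C_0$. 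I would note that this crude uniform bound suffices precisely because BTM is trained on a finite data set, so all count-type quantities are a priori bounded; no control on the iterates $b^{(t)}$ themselves is needed for this step.

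The step I expect to be the main obstacle is not the boundedness estimate but pinning down the martingale-difference property rigorously, because it hinges on the claim that the approximation $\tilde n_{\backslash i,w|k}=(n_w-1)q(z_{i'}=k|b_{i'})$ is conditionally unbiased for $\mathbb{E}[n_{\backslash i,w|k}]$ \emph{given the current parameters}. This requires being careful that $q(z_{i'}=k\mid b_{i'})$ is computed from the $\mathcal{F}^{(t)}$-measurable parameters (so it is a fixed function of $b_{i'}$ under the conditioning) and that $i'$ is drawn independently of everything in $\mathcal{F}^{(t)}$; one must also check that ``$i'\neq i$'' and the one-pass processing order do not introduce a subtle dependence that biases the estimate. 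Once that is settled, combining Lemma~\ref{xi} with the already-verified first and second conditions of Theorem~\ref{theorem} and the reformulation of $s_{i',k}$ as a (positively scaled) gradient of $D_1$ plus $\xi_{i',k}$ gives the third and fourth conditions, completing the convergence argument.
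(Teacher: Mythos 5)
Your treatment of the martingale-difference condition coincides with the paper's: condition on $\mathcal{F}^{(t)}$, average over the uniform draw of $i'$ among the biterms containing $w_{i1}$, and use that $n_{\backslash i, w_{i1}}\,\mathbb{E}_{i'}[q(z_{i'}=k|b_{i'})\delta(w_{i1}\in b_{i'})]=\sum_{i'\neq i}q(z_{i'}=k|b_{i'})\delta(w_{i1}\in b_{i'})$, which is exactly what $\mathbb{E}[n_{\backslash i, w_{i1}|k}]$ denotes in the CVB0 setting (a sum of variational probabilities), so the two terms cancel; your extra care about measurability of $q(\cdot|b_{i'})$ under $\mathcal{F}^{(t)}$ and independence of the draw is a sensible sharpening of what the paper leaves implicit. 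Where you genuinely depart from the paper is the variance condition. The paper expresses $\xi_{i',k}$ as the gap between a stochastic gradient $\nabla\tilde D_1$ and the true gradient $\nabla D_1$, bounds $\mathbb{E}[\|\xi^{(t)}\|^2|\mathcal{F}^{(t)}]$ by $C^2\mathbb{E}[\|\nabla\tilde D_1\|^2|\mathcal{F}^{(t)}]$, and then passes through an auxiliary $D_1^*$ to conclude $\mathbb{E}[\|\xi^{(t)}\|^2|\mathcal{F}^{(t)}]\le C^2G^2\|\nabla D_1\|^2$; this requires a lower bound on the prefactor $a_k^{\backslash i}b_{k,w_{i2}}^{\backslash i}/\bigl(b_{k,w_{i1}}^{\backslash i}c_k^{\backslash i}(c_k^{\backslash i}+1)\bigr)$ and the inequality $n_{\backslash i,w_{i1}}+\beta-b\le G(\mathbb{E}[n_{\backslash i,w_{i1}|k}]+\beta-b)$, both of which the paper asserts rather than proves. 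You instead bound $|\xi_{i',k}|\le 2N_B$ deterministically, using $q(z_{i'}=k|b_{i'})\in[0,1]$ and the finiteness of the corpus, obtaining $\mathbb{E}[\|\xi^{(t)}\|^2|\mathcal{F}^{(t)}]\le 4KN_B^2\le 4KN_B^2\,(1+\|\nabla D_1(b^{(t)}_{k,w_{i1}})\|^2)$, which satisfies the hypothesis of Lemma \ref{xi} since that condition admits an additive constant. Your route is more elementary and sidesteps the paper's unjustified constants, at the cost of a coarse, corpus-size-dependent bound with no gradient-proportional decay; the paper's bound, if its constants indeed exist, is homogeneous in $\|\nabla D_1\|$ and thus closer in spirit to condition 4 of Theorem \ref{theorem}, but it rests on positivity and lower-bound properties of the iterates that are never established. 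One small point worth adding to your argument: although you correctly observe that no control of $b^{(t)}$ is needed for the bound itself, you implicitly use that the normalization defining $q(z_{i'}=k|b_{i'})$ is well defined, i.e., that the iterates remain positive; this follows from the convex-combination form of the update $b^{(t+1)}=(1-\rho_t)b^{(t)}+\rho_t(\tilde n_{\backslash i,w|k}+\beta)$ with $\tilde n_{\backslash i,w|k}+\beta>0$, and stating it would close the argument.
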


\begin{proof}
 For the first condition, we show that
 \begin{equation}
  \begin{split}
   \mathbb{E}[\xi_k^{(t)}|\mathcal{F}^{(t)}] & = \mathbb{E}_{i'}[\xi_{i', k}] \\
   & = n_{\backslash i, w_{i1}}\mathbb{E}_{i'}[q(z_{i'}=k|b_{i'})\delta(w_{i1}\in b_{i'})] - \mathbb{E}[n_{\backslash i, w_{i1}|k}] \\
   & = \sum_{i'\neq i} q(z_{i'}=k|b_{i'})\delta(w_{i1}\in b_{i'}) - \mathbb{E}[n_{\backslash i, w_{i1}|k}] \\
   & = 0.
  \end{split}
 \end{equation}

Therefore the first condition is satisfied.
Recall Eq. \eqref{partial}:
$$
 \frac{\partial}{\partial b_{k, w_{i1}}^{\backslash i}} D_1(b_{k, w_{i1}}^{\backslash i}) =
 - \frac{q^{\backslash b_1}(b_i, z_i)}{b_{k, w_{i1}}^{\backslash i}}
 (\mathbb{E}[n_{\backslash i, w_{i1}|k}] + \beta - b_{k, w_{i1}}^{\backslash i}).
$$

We then define stochastic gradient as
\begin{equation}
 \frac{\partial}{\partial b_{k, w_{i1}}^{\backslash i}} \tilde D_1(b_{k, w_{i1}}^{\backslash i}) =
 - \frac{q^{\backslash b_1}(b_i, z_i)}{b_{k, w_{i1}}^{\backslash i}}
 (n_{\backslash i, w_{i1}}q(z_{i'}=k) + \beta - b_{k, w_{i1}}^{\backslash i}).
\end{equation}

We confirm that
\begin{equation}
 \mathbb{E}_{i'}[\frac{\partial}{\partial b_{k, w_{i1}}^{\backslash i}} \tilde D_1(b_{k, w_{i1}}^{\backslash i})] =
  \frac{\partial}{\partial b_{k, w_{i1}}^{\backslash i}} D_1(b_{k, w_{i1}}^{\backslash i}),
\end{equation}

The difference between the stochastic gradient and the real gradient is
\begin{equation}
 \begin{split}
  & \frac{\partial}{\partial b_{k, w_{i1}}^{\backslash i}} \tilde D_1(b_{k, w_{i1}}^{\backslash i}) - \frac{\partial}{\partial b_{k, w_{i1}}^{\backslash i}} D_1(b_{k, w_{i1}}^{\backslash i}) \\
  & = \frac{q^{\backslash b_1}(b_i, z_i)}{b_{k, w_{i1}}^{\backslash i}}(n_{\backslash i, w_{i1}}q(z_{i'}=k) - \mathbb{E}[n_{\backslash i, w_{i1}|k}]) \\
  & = \frac{q^{\backslash b_1}(b_i, z_i)}{b_{k, w_{i1}}^{\backslash i}}\xi_{i', k} \\
  & = \frac{a_k^{\backslash i}b_{k, w_{i2}}^{\backslash i}}{b_{k, w_{i1}}^{\backslash i}c_k^{\backslash i}(c_k^{\backslash i} + 1)}\xi_{i', k},
 \end{split}
\end{equation}
thus there exists a constant $C$ so that

\begin{equation}
 \xi_{i', k} \leq C
  \{\frac{\partial}{\partial b_{k, w_{i1}}^{\backslash i}} \tilde D_1(b_{k, w_{i1}}^{\backslash i}) - \frac{\partial}{\partial b_{k, w_{i1}}^{\backslash i}} D_1(b_{k, w_{i1}}^{\backslash i}) \}.
\end{equation}

Thus
\begin{equation}
 \begin{split}
  & \mathbb{E}[||\xi^{(t)}||^2|\mathcal{F}^{(t)}] \\
  & \leq C^2\mathbb{E}[||\nabla \tilde D_1(b_{k, w_{i1}}^{\backslash i}) - \nabla D_1(b_{k, w_{i1}}^{\backslash i})||^2|\mathcal{F}^{(t)}] \\
  & \leq C^2\mathbb{E}[||\nabla \tilde D_1(b_{k, w_{i1}}^{\backslash i}) ||^2|\mathcal{F}^{(t)}].
 \end{split}
\end{equation}

We then introduce $D_1^*(b_{k, w_{i1}}^{\backslash i}) \geq \tilde D_1(b_{k, w_{i1}}^{\backslash i})$ which is given by

\begin{equation}
 \frac{\partial}{\partial b_{k, w_{i1}}^{\backslash i}} D_1^*(b_{k, w_{i1}}^{\backslash i}) =
  \frac{q^{\backslash b_1}(b_i, z_i)}{b_{k, w_{i1}}^{\backslash i}} (n_{\backslash i, w_{i1}} + \beta - b_{k, w_{i1}}^{\backslash i}).
\end{equation}

Therefore, there exists another constant $G$ such that
\begin{equation}
 n_{\backslash i, w_{i1}} + \beta - b_{k, w_{i1}}^{\backslash i} \leq G(\mathbb{E}[n_{\backslash, w_{i1}|k}] + \beta - b_{k, w_{i1}}^{\backslash i}),
\end{equation}

for example,
\begin{equation}
 ||\nabla D_1^*(b_{k, w_{i1}}^{\backslash i,(t)})||^2 \leq G^2||\nabla D_1(b_{k, w_{i1}}^{\backslash i, (t)})||^2.
\end{equation}

Therefore, we can say
\begin{equation}
 \begin{split}
  \mathbb{E}[\xi^{(t)}||\mathcal{F}^{(t)}] & \leq C^2\mathbb{E}[||\nabla \tilde D_1(b_{k, w_{i1}}^{\backslash i,(t)})||^2|\mathcal{F}^{(t)}] \\
  & \leq C^2G^2||\nabla D_1(b_{k, w_{i1}|k}^{\backslash i, (t)})||^2,
 \end{split}
\end{equation}

which satisfies the second condition.
\qed
\end{proof}

To conclude, the convergence of our stochastic approximation is proved.

\end{document}